\DeclareMathOperator*{\argmax}{arg\,max}
\DeclareMathOperator*{\argmin}{arg\,min}
\title{Towards conservative inference in credal networks using belief functions: the case of credal chains} 
\author[1]{Marco Sangalli}
\author[1]{Thomas Krak}
\author[1]{Cassio De Campos}
\affil[1]{Uncertainty in AI, Eindhoven University of Technology, The Netherlands}
\begin{document}
\maketitle

\begin{abstract}
    This paper explores belief inference in credal networks using Dempster-Shafer theory. By building on previous work, 
    we propose a novel framework for propagating uncertainty through a subclass of credal networks, namely chains. 
    The proposed approach efficiently yields conservative intervals through belief and plausibility functions, combining computational speed with robust uncertainty representation.
    Key contributions include formalizing belief-based inference methods and comparing belief-based inference against classical sensitivity analysis. Numerical results highlight the advantages and limitations of applying belief inference within this framework, providing insights into its practical utility for chains and for credal networks in general.
\end{abstract}

\begin{keywords}
	Credal networks, belief functions,  Dempster-Shafer theory, approximate inference
\end{keywords}

\section{Introduction}
\label{sec:intro}
Bayesian networks (BNs), as probabilistic graphical models, provide a powerful framework for modelling probabilistic relationships among variables. 
Credal networks (CNs) generalize BNs by allowing for partially specified probabilistic information. It is well-known that inference in both BNs and CNs is in general computationally hard~\cite{MAUA2020133,darwiche2014}. 

We consider the problem of computing inferences in CNs under complete independence \cite{MAUA2020133,debock2016rcn}. 
There are, broadly speaking, two ways by which we might approach this problem. We might either try to identify specific inference problems and/or graph topologies that circumvent the hardness results; or, we can try to develop approximate methods. The first path is arguably difficult, since NP-hardness was already established for marginal inference on relatively simple graph structures like polytrees~\cite{Cassio2005} and even trees with evidence~\cite{maua2014}. Here we consider the second approach.
In particular, we investigate the merit in leveraging the existing machinery from the theory of belief functions, also known as Dempster-Shafer (D-S) theory \cite{dempster1968,Shafer1976}, to approximately compute inferences for a given CN under complete independence. At first glance, this has several upsides. For one, we can fully use the existing computational and algorithmic tools from that theory, which has recently been investigated specifically in the context of Bayesian networks generalized to belief functions~\cite{Shenoy2023}, where the computational complexity of BN inferences is largely reduced. Moreover, a recent result~\cite{VanCamp2025} essentially implies that the global model that we obtain from this is guaranteed to be a conservative bound on the CN in which we are actually interested (even though this result does not hold for marginals, as we will see later).

We work with the subcase of CNs that have the structure of a chain. We require that local models in the CN can be translated (possibly by conservative approximation) to belief functions. These choices are enough to unveil a number of facts about the idea of using D-S theory in inferences for CNs. We investigate different ways in which local models can be made into belief functions. In particular, we are interested in the quality of this approach; i.e. we would like to know (and reduce) how much information is lost with respect to the exact inference in the CN chains in which the local uncertainty models are specified using probability intervals.

Since exact inference in such CNs is computationally tractable, we are able to directly compare these methods from D-S theory to the gold standard of the underlying model of interest and to reach conclusions about the quality of the approximations in chains and beyond. 
Despite optimizations that generate tighter bounds than the direct application of D-S theory, we conclude that---outside specific settings---such belief-based inference often generates too wide intervals to be a practically suitable approximation for the CN inferences of interest. 

\section{Preliminaries}\label{sec:preli}
This section introduces the basic concepts of credal networks~\cite{itip:pgms}, Dempster-Shafer theory~\cite{dempster1967, Shafer1976} of belief functions, and other required concepts for this paper.

\subsection{Bayesian and credal networks} \label{sec:credalnet}
A Bayesian network (BN) is a probabilistic graphical model where a directed acyclic graph (DAG) encodes probabilistic relationships between random variables. Each node represents a variable, and edges indicate conditional dependencies. 
Credal networks (CNs) extend BNs by addressing uncertainty in probability specifications. Instead of fixed point estimates, CNs utilize sets of probability distributions (credal sets), allowing representation of multiple compatible probabilistic models. 
This approach provides a more comprehensive framework for reasoning under incomplete or ambiguous information, capturing broader uncertainty in probabilistic modelling through credal sets rather than single distributions.

A basic example of a CN is one with the DAG structure $A\to B$, where $A:\Omega \to E_A$ and $B:\Omega \to E_B$ are two discrete random variables, $E_A:=\{a_1,\dots, a_n\}$, $E_B:=\{b_1,\dots, b_m\}$, and whose credal sets are specified using collections of intervals:
\begin{align*}
    &\underline{p}^A_i\le \mathbb{P}(A=a_i)\le \overline{p}^A_i,\quad
    \underline{p}^i_j\le \mathbb{P}(B=b_j|A=a_i)\le \overline{p}^i_j.
\end{align*}
Throughout, we consider CNs under \emph{complete independence}~\cite{MAUA2020133,debock2016rcn}, which essentially means that the joint model that we consider can be expressed as the set of all BNs whose local models---$\mathbb{P}(A)$ and $\mathbb{P}(B\vert A)$ in this example---are compatible with these credal sets.

Solving an inference problem for such a CN amounts to computing (tight) lower and upper bounds on some probabilistic quantity with respect to all the BNs in this set. For instance, in this example we may want to compute lower and upper bounds for $\mathbb{P}(B=b_j)$, which are given by
\begin{align*}
    &\underline{\mathbb{P}}(B=b_j):=\min_{\mathbb{P}\in\text{CN}} \sum_{i=1}^n \mathbb{P}(B=b_j | A=a_i) \mathbb{P}(A=a_i),\text{ and} \\
    &\overline{\mathbb{P}}(B=b_j):=\max_{\mathbb{P}\in\text{CN}} \sum_{i=1}^n \mathbb{P}(B=b_j | A=a_i) \mathbb{P}(A=a_i).
\end{align*}

In a chain $X_1\to\dots\to X_k$ we may do something similar to infer bounds on $\mathbb{P}(X_k=\tilde{x})$, using
\begin{align*}
    \underline{\mathbb{P}}(X_k=\tilde{x}):=\min_{\mathbb{P}\in\text{CN}} \sum_{x_1,\dots,x_{k-1}} \mathbb{P}(x_1)\mathbb{P}(x_2|x_1)\dots \mathbb{P}(\tilde{x}|x_{k-1}),\\
    \overline{\mathbb{P}}(X_k=\tilde{x}):=\max_{\mathbb{P}\in\text{CN}} \sum_{x_1,\dots,x_{k-1}} \mathbb{P}(x_1)\mathbb{P}(x_2|x_1)\dots \mathbb{P}(\tilde{x}|x_{k-1}),
\end{align*}
where the minimization and maximization are understood to be taken with respect to all the combinations of the local models specified in the CN.

\subsection{Mass functions and belief functions}
Let $(E,2^{E})$ be a discrete measurable space, where $2^{E}$ represents the power set of $E$. A mass function \cite{itip:specialcases, Shenoy2023} (also called basic probability assignment) $m$ on $E$ is a function $m: 2^{E} \to [0,1]$ such that
\begin{itemize}
    \item $m(\emptyset)=0$ 
    \item $\sum\limits_{V\in 2^{E}} m(V)=1$.
\end{itemize}
Each set $V$ such that $m(V)>0$ is called a \textit{focal set} of $m$. If the only focal set of $m$ is $E$, the mass function is said to be \textit{vacuous}. Meanwhile, if the only set with mass is not the whole space, $m$ is said to be \textit{deterministic}. If $m$ gives masses only to singletons, it is said to be \textit{Bayesian}; there exists a 1-1 correspondence between Bayesian basic probability assignments and discrete probability distributions.

Given a mass function $m$ on $E$, its associated belief function $Bel_m: 2^{E} \to [0,1]$ is defined as 
\begin{equation}\label{eq:mass_to_belief}
    Bel_m(W):=\sum\limits_{V\subseteq W} m(V)\: \text{ for all } W \in 2^E.
\end{equation}
The plausibility function $Pl_m: 2^{E} \to [0,1]$ associated to the mass function $m$ is defined as
\begin{equation}\label{eq:mass_to_plausibility}
    Pl_m(W):=\sum\limits_{V\cap W\ne \emptyset} m(V) \: \text{ for all } W \in 2^E.
\end{equation}
These functions satisfy the conjugacy relation
\begin{equation*}
   Pl_m(W)= 1 - Bel_m(E\smallsetminus W) \: \text{ for all $W\in 2^E$.}
\end{equation*}

\paragraph{Operations with mass functions}
There are three important operations from D-S theory that we will use in this work: marginalization, vacuous extension, and Dempster's rule of combination.

Let $(E,2^{E})$ and $(F,2^{F})$ be two measurable spaces and consider $Y\subseteq E\times F$. We define the projection of $Y$ on $E$ as
\begin{equation*}
    Y^{\downarrow E}:=\{e\in E : (e,f)\in Y\}\,.
\end{equation*}
Let $m$ be a mass function on $E\times F$. The \textit{marginalization} of $m$ to $E$ is a mass function $m^{\downarrow E}$ on $E$ defined as~\cite{Shenoy2023} 
\begin{equation}\label{def: marginal}
    m^{\downarrow E}(V):=\sum_{\substack{Y\subseteq E\times F\\Y^{\downarrow E}=V}} m(Y) \quad\text{for all $V\in 2^{E}$}.
\end{equation}

Let now $m$ be a mass function on $E$. The \textit{vacuous extension} $m^{\uparrow E\times F}$ of $m$ to $E\times F$ is defined as
\begin{equation}
    m^{\uparrow E\times F}(V\times F):=m(V) \quad\text{for all $V\in 2^{E}$}.
\end{equation}

Finally, let $m_1$ and $m_2$ be two mass functions on $(E,2^{E})$. Dempster's rule of combination, denoted by $\otimes$, is an operation between two mass functions that yields a new mass function on the same space. It is defined as
\begin{equation}
    m_1\otimes m_2 (V):=\frac{1}{1-k}\sum_{\substack{V_1,V_2\in 2^{E}\\ V_1\cap V_2=V}} m_1(V_1)m_2(V_2)
\end{equation}
where $k\in [0,1]$ is the \emph{degree of conflict}, defined as
\begin{equation}
    k:=\sum_{\substack{V_1,V_2\in 2^{E}\\ V_1\cap V_2=\emptyset}}m_1(V_1)m_2(V_2).
\end{equation}
If $k=0$ we say that there is no conflict between $m_1$ and $m_2$; conversely if $k=1$ they are in total conflict and they cannot be combined. We shall also note that Dempster's rule of combination is both associative and commutative and, by vacuous extending, can combine mass functions defined on different state spaces.

\subsection{Coherence of probability intervals}\label{subsec:coherent_interval}
Let $(\Omega, \mathcal{A}, \mathbb{P})$ be a probability space 
and let $X:\Omega \to  (E_X,2^{E_X})$ be a discrete random variable taking values in $E_X:=\{x_1,\dots,x_n\}$. Let $(\underline{p}^X,\overline{p}^X):=\{(\underline{p}_i^X,\overline{p}_i^X)\}_{i=1}^n$ be a probability interval for the random variable $X$, namely
\begin{equation*}
    \underline{p}_i^X\le \mathbb{P}(X=x_i) \le \overline{p}_i^X, \quad \text{for all $i=1,\dots,n$}.
\end{equation*}
We say that $(\underline{p}^X,\overline{p}^X)$ is a \textit{coherent} probability interval~\cite{itip:specialcases, MoralGarcia2021} if it satisfies the following conditions:
\begin{description}
    \item[\textbf{[Coh1]}] $\underline{S}_X:=\sum_{i=1}^n \underline{p}_i^X \le 1$ and $\overline{S}_X:=\sum_{i=1}^n \overline{p}_i^X \ge 1$;\label{condcoh1}
    \item[\textbf{[Coh2]}] $\underline{p}_i^X \ge 1-\sum_{h\ne i} \overline{p}_h^X$, for all $i=1,\dots,n$;\label{condcoh2}
    \item[\textbf{[Coh3]}] $\overline{p}_i^X \le 1-\sum_{h\ne i} \underline{p}_h^X$, for all $i=1,\dots,n$.\label{condcoh3}
\end{description}

Given a mass function $m_X$ on $E_X$, we can generate a coherent probability interval $(\underline{p}^X,\overline{p}^X)$ for the random variable $X:\Omega \to  (E_X,2^{E_X})$ using the belief and plausibility functions:
\begin{equation*}
    \underline{p}_i^X=Bel_{m_X}(\{x_i\})\text{, and }\overline{p}_i^X=Pl_{m_X}(\{x_i\}), 
\end{equation*}
for all $i=1,\dots, n$. It immediately follows from the definitions of belief and plausibility that $Bel_{m_X}(\{x_i\})\le Pl_{m_X}(\{x_i\})$ and the three conditions of coherence hold.

So, from a given mass function, we can always generate a coherent probability interval that agrees with it. However, the converse is not true; there are coherent probability intervals for which it is not possible to find a corresponding mass function \cite{MoralGarcia2021}. 
In particular, one \emph{can} always take the natural extension of a coherent probability interval, to obtain a coherent lower prevision~\cite{itip:specialcases}; see e.g.~\eqref{def:belmob} further on. However, this lower prevision need not be a belief function, and hence need not have an associated (non-negative) mass function~\cite{itip:specialcases, MoralGarcia2021}.
Moreover, even if such a mass function does exist, it need not be unique; for a given coherent probability interval, there may in general be multiple mass functions whose associated belief and plausibility functions on the singletons, agree with this interval.
This issue will be revisited in more detail in Section~\ref{sec:good}.

\section{Belief inference in chains}\label{sec:body}
In this section we expand the framework for belief inference  previously introduced by Shenoy~\cite{Shenoy2023}. 
Let $A \rightarrow B$ be a credal network, where $A$ and $B$ are random variables of cardinality $n$ and $m$ respectively. Let us call $E_A=\{a_1,\dots,a_n\}$ and $E_B=\{b_1,\dots,b_m\}$ the state spaces of $A$ and $B$ respectively. Suppose we are given a mass function $m_A$ on $E_A$, and $n$ mass functions $m_{B_{a_i}}$ on $E_B$, that correspond each to a probability interval for $\mathbb{P}(B|A=a_i)$, i.e. 
\begin{equation}\label{eq:basic_conditional_bounds}
\underline{p}_j^i:=Bel_{m_{B_{a_i}}}(b_j)\le \mathbb{P}(b_j|a_i)\le Pl_{m_{B_{a_i}}}(b_j)=:\overline{p}_j^i
\end{equation}
where we simplify the notation  $\mathbb{P}(B=b_j|A=a_i)=\mathbb{P}(b_j|a_i)$.
We can extend each mass function $m_{B_{a_i}}$ to a mass function $m_{B|a_i}$ on $E_A\times E_B$, using Smets' conditional embedding~\cite{Shenoy2023,smets1978modele}, by defining, for all $V\in 2^{E_B}$, the focal sets
\begin{align*}
    m_{B|a_i}(a_i\times V \cup (E_A\smallsetminus \{a_i\} \times E_B))=m_{B_{a_i}}(V).  
\end{align*}
The mass functions built in this way are the only ones on $E_A\times E_B$ that have the following properties:
\begin{itemize}
    \item $(m_{B|a_i}\otimes m_{A=a_i}^{\uparrow E_A\times E_B})^{\downarrow E_B}=m_{B_{a_i}}$ for all $i=1,\dots, n$, where $m_{A=a_i}$ is the deterministic mass function on $E_A$ with $m_{A=a_i}(\{a_i\})=1$;
    \item Each $m_{B|a_i}^{\downarrow E_A}$ is vacuous on $E_A$;
    \item $m_{B|A}:= \bigotimes_{i=1}^n m_{B|a_i}$ is a mass function on $E_A\times E_B$, in which all of the pairwise combinations have no conflict, and $m_{B|A}^{\downarrow E_A}$ is vacuous on $E_A$.
\end{itemize}
In summary, we can express the local uncertainty models of this CN using $m_A$ and $m_{B\vert A}$. The associated global model $m_{A,B}$ is then simply obtained by defining $m_{A,B}=m_A^{\uparrow E_A \times E_B}\otimes m_{B|A}$. It can be easily checked that the mass functions that are combined to obtain $m_{A,B}$ have no conflict, since the intersection of any two focal sets is always non-empty.

From this global model, we can derive any inferences of interest. In this work, we generally focus on marginal inference, e.g. the marginal uncertainty on $B$, expressed as $m_B=m_{A,B}^{\downarrow B}$.
 Using the mass function $m_B$, we may generate a probability interval on $B$, $(\underline{p}^B,\overline{p}^B)$, using $Bel_{m_B}$ and $Pl_{m_B}$. Our first result is a closed-form expression for the values of this probability interval.
\begin{theorem}\label{th:inferenceintervals}
Let $m_A$, $m_{B\vert A}$, $m_{A,B}$, and $m_B$ be as defined above. Then for all $j=1,\ldots,m$ it holds that
\begin{align}
    \underline{p}^B_j&:= Bel_{m_B}(\{b_j\}) = \sum\limits_{V\in 2^{E_A}} \left( m_A(V)\prod_{i: a_i\in V} \underline{p}^i_j \right)\,, \text{and} \label{eq:lowboundB}\\
    \overline{p}^B_j&:=Pl_{m_B}(\{b_j\}) = 1-\sum\limits_{V\in 2^{E_A}} \left( m_A(V)\prod_{i: a_i\in V} (1-\overline{p}^i_j) \right)\,,\label{eq:upboundB}
\end{align}
with $\underline{p}_j^i$ and $\overline{p}_j^i$ as in Equation~\eqref{eq:basic_conditional_bounds}.
\end{theorem}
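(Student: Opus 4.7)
The plan is to explicitly describe the focal sets and masses of $m_{A,B}$ and then of $m_B$, after which $Bel_{m_B}(\{b_j\})$ is read off from its definition while $Pl_{m_B}(\{b_j\})$ is obtained through the conjugacy relation. The first step is to describe the focal sets of $m_{B|A} = \bigotimes_{i=1}^n m_{B|a_i}$. By Smets' conditional embedding, every focal set of $m_{B|a_i}$ has the form $F_i(V_i) := (\{a_i\}\times V_i) \cup ((E_A\smallsetminus\{a_i\})\times E_B)$ with mass $m_{B_{a_i}}(V_i)$, for $V_i$ a focal set of $m_{B_{a_i}}$. A short case analysis on a pair $(a_k,b)$ yields
\[
\bigcap_{i=1}^n F_i(V_i) \;=\; \bigcup_{k=1}^n \{a_k\}\times V_k,
\]
which is never empty, confirming the conflict-free claim stated in the text; consequently the focal sets of $m_{B|A}$ are exactly the sets $\bigcup_{k=1}^n\{a_k\}\times V_k$, each receiving mass $\prod_{k=1}^n m_{B_{a_k}}(V_k)$.

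Intersecting with a focal set $U\times E_B$ of $m_A^{\uparrow E_A\times E_B}$ collapses this to $\bigcup_{k:\,a_k\in U}\{a_k\}\times V_k$ with mass $m_A(U)\prod_{k=1}^n m_{B_{a_k}}(V_k)$. Projecting onto $E_B$ via \eqref{def: marginal} produces $\bigcup_{k:\,a_k\in U} V_k$, and hence
\[
m_B(W) \;=\; \sum_{\substack{U,V_1,\dots,V_n\\ \bigcup_{k:\,a_k\in U} V_k = W}} m_A(U)\prod_{k=1}^n m_{B_{a_k}}(V_k).
\]

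Since $m_B(\emptyset)=0$, we have $Bel_{m_B}(\{b_j\}) = m_B(\{b_j\})$, and the equation $\bigcup_{k:\,a_k\in U} V_k = \{b_j\}$ forces $V_k=\{b_j\}$ for every $k$ with $a_k\in U$, while the remaining $V_k$ are unconstrained and their $m_{B_{a_k}}$-masses sum to $1$. Using $m_{B_{a_k}}(\{b_j\})=Bel_{m_{B_{a_k}}}(\{b_j\})=\underline{p}_j^k$ then gives \eqref{eq:lowboundB}. For the upper bound I invoke conjugacy, $Pl_{m_B}(\{b_j\}) = 1 - Bel_{m_B}(E_B\smallsetminus\{b_j\})$, and repeat the argument with the relaxed constraint $V_k\subseteq E_B\smallsetminus\{b_j\}$ for each $k$ with $a_k\in U$; the resulting inner factors equal $Bel_{m_{B_{a_k}}}(E_B\smallsetminus\{b_j\}) = 1-\overline{p}_j^k$, yielding \eqref{eq:upboundB}.

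The main obstacle is the bookkeeping of the first step: identifying $\bigcap_i F_i(V_i)$ correctly and verifying that Dempster's rule produces at every stage the plain product masses $m_A(U)\prod_k m_{B_{a_k}}(V_k)$, with no renormalization because the degree of conflict vanishes. Once this combinatorial description of the focal sets is in hand, the remaining manipulations are just exchanging sum and product and recognizing the inner sums as belief values of the local models on $\{b_j\}$ and its complement.
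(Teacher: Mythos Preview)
Your proposal is correct and follows essentially the same route as the paper's proof: both identify the focal sets of $m_{B|A}$ as $\bigcup_k \{a_k\}\times V_k$ with product masses, intersect with $U\times E_B$ to obtain the focal sets of $m_{A,B}$, sum out the factors for $a_k\notin U$ using $\sum_{V_k} m_{B_{a_k}}(V_k)=1$, and then read off $Bel_{m_B}(\{b_j\})=m_B(\{b_j\})$ and apply conjugacy for the plausibility. The only cosmetic difference is that the paper performs the summing-out when describing $m_{A,B}$ itself (obtaining $m_A(V)\prod_{i:a_i\in V} m_{B_{a_i}}(T^i)$ directly), whereas you carry the full product $\prod_{k=1}^n m_{B_{a_k}}(V_k)$ into the expression for $m_B$ and collapse the unconstrained coordinates at the end; the argument is otherwise identical.
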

This result is a particular case of Equation (5.1) in \cite{smets1993belief} and the proof can be found in the Appendix.

One may now wonder, how the interval $\smash{(\underline{p}^B, \overline{p}^B)}$ obtained using Theorem~\ref{th:inferenceintervals}, relates to the values $\underline{\mathbb{P}}(B)$ and $\overline{\mathbb{P}}(B)$ computed from a CN under complete independence, as in Section~\ref{sec:credalnet}.
At this stage we need to make the observation that for the CN in Section~\ref{sec:credalnet}, our uncertainty model for $A$ was a probability interval $\smash{(\underline{p}^A, \overline{p}^A)}$. Conversely, our discussion here started from an (arbitrary) mass function $m_A$ and, as mentioned in Section~\ref{subsec:coherent_interval}, there need not be a one-to-one correspondence between the two.

As the following example shows, it is sometimes possible to choose a mass function $m_A$ that agrees with a given $\smash{(\underline{p}^A, \overline{p}^A)}$, such that the values $\smash{(\underline{p}^B, \overline{p}^B)}$ are \emph{not} guaranteed bounds on $\underline{\mathbb{P}}(B)$ and $\overline{\mathbb{P}}(B)$.
\begin{example}\label{ex:counterexbound}
    Consider a CN $A \to B$, where the random variable $A$ has state space $E_A = \{a_1, \ldots, a_4\}$ and satisfies $0.2 \leq \mathbb{P}(A = a_i) \leq 0.3$ for all $i = 1, \dots, 4$. Consider the mass function $m_A$ on $E_A$ that assigns mass $0.2$ to singletons, $m(\{a_1, a_2\}) = m(\{a_3, a_4\}) = 0.1$, and zero mass elsewhere. It is easily verified that indeed $Bel_{m_A}(\{a_i\})=0.2$ and $Pl_{m_A}(\{a_i\})=0.3$ for all $i=1,\ldots,4$.
    
    Fix any $j\in\{1,\dots,|E_B|\}$, and suppose the collection of lower bounds for $\mathbb{P}(b_j|a_i)$ is $(\underline{p}_j^i)_{i=1}^4=(0.2,0.4,0.8,0.9)$.
    If we compute $\smash{\underline{p}_j^B}$ using (\ref{eq:lowboundB})  we obtain a value of $0.54$, whereas it is easy to check that
    \begin{equation*}
        \underline{\mathbb{P}}(B=b_j)=\min\limits_{\substack{\underline{p}^A_i\le \mathbb{P}(a_i)\le \overline{p}^A_i\\ \sum_i \mathbb{P}(a_i) =1}} \sum_{i=1}^n \mathbb{P}(a_i) \underline{p}^i_j=0.52\,.
    \end{equation*}
    This provides an example where the credal lower bound for $\mathbb{P}(B=b_j)$ is lower than $\underline{p}_j^B$.  
    \hfill$\diamond$
\end{example}
We next focus on finding mass functions which ensure that intervals generated by belief inference, provide a conservative approximation of credal intervals.

\subsection{Standard good mass and good probability intervals}\label{sec:good}
We say that a coherent probability interval $(\underline{p}^A,\overline{p}^A)$ for $A:\Omega \to E_A$ is \textit{good} if
\begin{equation}\label{eq:def_good_delta}
    \Delta_A:=\overline{S}_A+(n-2)\underline{S}_A-(n-1)\ge 0\,,
\end{equation}
where $n=|E_A|$, $\overline{S}_A:=\sum_{i=1}^n \overline{p}^A_i$ and $\underline{S}_A:=\sum_{i=1}^n \underline{p}^A_i$.
If a coherent probability interval $\smash{(\underline{p}^A,\overline{p}^A)}$ for the random variable $A$ is \textit{good}, then there always exists a mass function $m_A$ with at most $2n +1$ focal sets, such that the associated belief and plausibility of singletons match this interval. One such mass function $m_A$ that satisfies this property is characterized by
\begin{align*}
    &m_A(\{a_i\})=\underline{p}_i^A \\
    &m_A(E_A\smallsetminus\{a_i\})=1-\overline{p}_i^A-\sum_{h\ne i}\underline{p}_h^A \\
    &m_A(E_A)=\Delta_A
\end{align*}
for all $i=1,\dots,n$.
We shall call this mass function on $E_A$ the \textit{standard good mass} (SGM) associated to $\smash{(\underline{p}^A,\overline{p}^A)}$. 

\paragraph{Relation between M\"obius inverse and standard good mass}
It is well-known that there is a one-to-one correspondence between belief (resp. plausibility) functions, and mass functions. Indeed, Equation~\eqref{eq:mass_to_belief} (resp.~\eqref{eq:mass_to_plausibility}) gives the connection in one direction. To go from a belief function $Bel$ to a mass function $m$, one uses the M\"obius inverse
\begin{equation*}
    m(V)=\sum_{W\subseteq V}(-1)^{|V\smallsetminus W|} Bel(W).
\end{equation*}

If we start from a coherent probability interval $\smash{(\underline{p}^A, \overline{p}^A)}$ and want to use this machinery, we therefore first have to represent it as a belief function. It is also well-known that this is not always possible; see e.g.~\cite{MoralGarcia2021} for a characterization of when it is. Following this work, the basic idea is to take the natural extension of this interval,
\begin{equation} \label{def:belmob}
\begin{split}
    &Bel(V)=\max\left\{{\sum_{i:a_i \in V} \underline{p}_i^A} , 1 - \sum_{i:a_i \notin V}\overline{p}_i^A\right\}\,,\\
    &Pl(V)=\min\left\{{\sum_{i:a_i \in V} \overline{p}_i^A} , 1 - \sum_{i:a_i \notin V}\underline{p}_i^A\right\}.\\
\end{split}
\end{equation}
If (and only if) this extension is indeed a belief (resp. plausibility) function, we can then use the M\"obius inverse to find an associated mass function. 

In the specific case that the interval $\smash{(\underline{p}^A, \overline{p}^A)}$ is \emph{good}, we have the following result.
\begin{theorem}\label{th:mobius}
    If $(\underline{p}^A,\overline{p}^A)$ is a good coherent probability interval for the random variable $A:\Omega \to E_A=\{a_1,\dots,a_n\}$, then the mass function $m_A^M$ obtained using (\ref{def:belmob}) and the M\"obius inverse, coincides with the standard good mass associated to $(\underline{p}^A,\overline{p}^A)$.
\end{theorem}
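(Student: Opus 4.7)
The plan is to avoid expanding the alternating M\"obius sum directly and instead to exploit uniqueness of M\"obius inversion: if I can show that the belief function $Bel_{m_A^{SGM}}$ induced from the standard good mass (which I denote $m_A^{SGM}$) via Equation~(\ref{eq:mass_to_belief}) coincides on every $V \in 2^{E_A}$ with the natural extension $Bel$ of Equation~(\ref{def:belmob}), then $m_A^M$ and $m_A^{SGM}$ are both M\"obius inverses of the same belief function, hence equal.

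The first step is to compute $Bel_{m_A^{SGM}}(V) = \sum_{W \subseteq V} m_A^{SGM}(W)$ by cases on $|V|$. Since $m_A^{SGM}$ is supported only on singletons, complements of singletons, and $E_A$ itself, the sum collapses to a few explicit terms: for $V = E_A$ substituting the definition of $\Delta_A$ gives $1$; for $V = E_A \setminus \{a_j\}$ only the singleton terms with $i \neq j$ and $m_A^{SGM}(V)$ itself contribute, and a direct computation yields $1 - \overline{p}_j^A$; for every $V$ with $|V| \le n - 2$ only the singleton terms can possibly contribute, giving $\sum_{i : a_i \in V} \underline{p}_i^A$.

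The second step is to match each case with the natural extension~(\ref{def:belmob}). On singletons $V = \{a_i\}$, coherence condition \textbf{[Coh2]} ensures $\underline{p}_i^A \ge 1 - \sum_{h \neq i} \overline{p}_h^A$, so the max attains the lower-sum branch, giving $\underline{p}_i^A$ as desired. On the complements of singletons $V = E_A \setminus \{a_j\}$, coherence \textbf{[Coh3]} conversely pushes the max onto the upper-sum branch, giving $1 - \overline{p}_j^A$. For $V = E_A$ both formulas equal $1$. The interesting case is $2 \le |V| \le n - 2$: here the match reduces to the inequality $\sum_{i : a_i \in V} \underline{p}_i^A + \sum_{i : a_i \notin V} \overline{p}_i^A \ge 1$, or equivalently, writing $g_i := \overline{p}_i^A - \underline{p}_i^A$, to $\sum_{i \notin V} g_i \ge 1 - \underline{S}_A$.

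The hard part will be establishing this last inequality, and this is where goodness ($\Delta_A \ge 0$) enters essentially. Since $|V^c| \ge 2$ and the gaps $g_i$ are non-negative, it suffices to show that $g_i + g_j \ge 1 - \underline{S}_A$ for every pair $i \neq j$. Coherence \textbf{[Coh3]} bounds each individual gap by $g_i \le 1 - \underline{S}_A$, so the sum of any $n - 2$ gaps is at most $(n - 2)(1 - \underline{S}_A)$. The goodness condition, rewritten as $\overline{S}_A - 1 \ge (n - 2)(1 - \underline{S}_A)$, then says this same sum of $n - 2$ gaps is at most $\overline{S}_A - 1$; subtracting from the total $\sum_i g_i = \overline{S}_A - \underline{S}_A$ leaves at least $1 - \underline{S}_A$ for the remaining two gaps. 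Combined with the case analysis above and the uniqueness of M\"obius inversion, this yields $m_A^M = m_A^{SGM}$.
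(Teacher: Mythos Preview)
Your argument is correct. The key inequality you isolate---that $\sum_{i\notin V} g_i \ge 1-\underline{S}_A$ whenever $|V|\le n-2$---is exactly the same one the paper needs, and your derivation of it from \textbf{[Coh3]} and goodness is essentially the paper's own (they bound $\sum_{i\in V} g_i$ from above by $(n-2)(1-\underline{S}_A)\le \overline{S}_A-1$, which is just your subtraction read from the other side).

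Where you differ is in organisation. The paper computes the M\"obius inverse $m_A^M$ directly, layer by layer: it first shows $m_A^M(\{a_i\})=\underline{p}_i^A$, then uses strong induction on $|V|$ to show $m_A^M(V)=0$ for $2\le |V|\le n-2$ (the induction hypothesis is needed so that $\sum_{W\subset V} m_A^M(W)$ collapses to the singleton contributions), and finally handles $|V|=n-1$ and $|V|=n$. You instead go in the forward direction: compute $Bel_{m_A^{SGM}}$ explicitly from the known support of the SGM, match it with the natural extension~(\ref{def:belmob}) on every $V$, and conclude by uniqueness of M\"obius inversion. Your route sidesteps the induction entirely, since summing a known mass function over subsets of $V$ is immediate once you observe which focal sets can fit inside $V$. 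The trade-off is that the paper's computation makes the vanishing of intermediate masses explicit, while yours packages that into the bijectivity of the M\"obius transform; both are legitimate, and yours is arguably the cleaner of the two.
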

The proof of this result can be found in the Appendix.

\subsection{Credal bound for standard good masses}\label{subsec:bound}
In this section we present one of the main results of this paper, namely the fact that when the local probability intervals of a CN can be represented using SGMs, then the marginal inference $(\underline{p}^B,\overline{p}^B)$ computed using~\eqref{eq:lowboundB} and~\eqref{eq:upboundB} is always a conservative approximation of the credal bounds $\underline{\mathbb{P}}(B)$ and $\overline{\mathbb{P}}(B)$.
\begin{theorem} \label{th:bound}
    If $(\underline{p}^A, \overline{p}^A)$ is a good coherent probability interval and $m_A$ its associated standard good mass, then for all $j=1,\dots, m$ it holds that
    \begin{align}
        \underline{p}^B_j\le \min_{\mathcal{C}} \sum_{i=1}^n \mathbb{P}(B=b_j|A=a_i)\mathbb{P}(A=a_i)\label{ineq:lowbound}\\ 
        \overline{p}^B_j\ge \max_{\mathcal{C}} \sum_{i=1}^n \mathbb{P}(B=b_j|A=a_i)\mathbb{P}(A=a_i) \label{ineq:upbound}
    \end{align}
    where $\mathcal{C}=\{\underline{p}^i_j\le \mathbb{P}(B=b_j|A=a_i)\le \overline{p}^i_j\text{ , } \underline{p}^A_i\le \mathbb{P}(A=a_i)\le \overline{p}^A_i \text{ , } \sum_{i} \mathbb{P}(A=a_i)=1\}$.
\end{theorem}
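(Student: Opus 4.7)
My plan is to prove~\eqref{ineq:lowbound} directly; inequality~\eqref{ineq:upbound} will follow by an analogous argument applied to the complementary event, using the conjugacy $Pl = 1 - Bel(\cdot^c)$. The strategy has three ingredients: a pointwise bound on the product appearing in~\eqref{eq:lowboundB}, recognition of the resulting expression as a lower expectation with respect to $Bel_{m_A}$, and the fact that for SGMs this lower expectation coincides with the credal minimum.

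First I would exploit that every $\underline{p}_j^i \in [0,1]$ and $m_A(\emptyset)=0$: for every focal set $V$ of $m_A$,
\[
\prod_{i:\,a_i \in V} \underline{p}_j^i \;\le\; \min_{i:\,a_i \in V} \underline{p}_j^i .
\]
Weighting by $m_A(V)$ and summing, Theorem~\ref{th:inferenceintervals} gives
\[
\underline{p}_j^B \;\le\; \sum_{V \in 2^{E_A}} m_A(V)\, \min_{i:\,a_i \in V} \underline{p}_j^i .
\]
I would then recognise the right-hand side as the Choquet integral of the function $a_i \mapsto \underline{p}_j^i$ with respect to $Bel_{m_A}$. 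Because belief functions are $2$-monotone, this Choquet integral equals the lower expectation $\min_{P \in \mathcal{P}(m_A)} \sum_i \underline{p}_j^i P(\{a_i\})$, where $\mathcal{P}(m_A) := \{P : P(V) \ge Bel_{m_A}(V) \text{ for all } V \subseteq E_A\}$.

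Next I would identify $\mathcal{P}(m_A)$ with $\mathcal{C}_A$, the marginal credal set on $A$ embedded in $\mathcal{C}$. By Theorem~\ref{th:mobius}, $Bel_{m_A}$ is exactly the natural extension~\eqref{def:belmob} of the good interval $(\underline{p}^A,\overline{p}^A)$. Evaluating the dominance $P(V) \ge Bel_{m_A}(V)$ on the singletons $V=\{a_i\}$ and their complements $V = E_A\smallsetminus\{a_i\}$ recovers $\underline{p}_i^A \le P(\{a_i\}) \le \overline{p}_i^A$, while conversely any $P$ satisfying these interval constraints dominates~\eqref{def:belmob} on every $V$ by additivity; hence $\mathcal{P}(m_A) = \mathcal{C}_A$. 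In the CN minimum on the right of~\eqref{ineq:lowbound}, the objective is linear in each $\mathbb{P}(B=b_j|A=a_i)$ with non-negative weight $\mathbb{P}(A=a_i)$, and the conditionals are constrained independently; the minimum is therefore attained at $\mathbb{P}(b_j|a_i) = \underline{p}_j^i$, reducing that minimum to $\min_{q \in \mathcal{C}_A} \sum_i \underline{p}_j^i q_i$. Combining the three steps yields~\eqref{ineq:lowbound}, and~\eqref{ineq:upbound} follows by repeating the argument on $1-\overline{p}_j^i$ using~\eqref{eq:upboundB}.

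The main obstacle is the identification $\mathcal{P}(m_A) = \mathcal{C}_A$: this is precisely where the SGM hypothesis is indispensable, since for a non-SGM $m_A$ that merely matches the singleton bounds, $\mathcal{P}(m_A)$ can be a strict subset of $\mathcal{C}_A$, and the Choquet bound would then refer to a strictly smaller credal set than the one we need to dominate. This is exactly why Example~\ref{ex:counterexbound} escapes the conclusion of the theorem.
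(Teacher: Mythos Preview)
Your argument is correct and takes a genuinely different route from the paper. The paper's proof is purely algebraic: it orders the $\underline{p}^i_j$, explicitly identifies the vertex of $\mathcal{C}_A$ that attains the credal minimum, writes out the SGM focal-set contributions to~\eqref{eq:lowboundB}, and then pushes through an elementary chain of inequalities using $\prod_{h\neq i}\underline{p}^h_j \le \underline{p}^1_j$ together with the combinatorial identity $\sum_{i\neq 1} m_A(E_A\smallsetminus\{a_i\}) + m_A(E_A) = \overline{p}^A_1-\underline{p}^A_1$. Your approach instead factors the problem into three conceptual steps---the product--to--minimum bound, the random-set representation of the Choquet integral $\sum_V m_A(V)\min_{a_i\in V}\underline{p}^i_j$, and the core identification $\mathcal{P}(m_A)=\mathcal{C}_A$---and obtains the result without ever constructing the minimiser explicitly. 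The trade-off is that your proof imports two external facts (the Choquet representation and the $2$-monotone lower-expectation theorem) and crucially leans on Theorem~\ref{th:mobius}, which the paper's own proof of Theorem~\ref{th:bound} does not use; in exchange you get a cleaner explanation of \emph{why} the SGM hypothesis is the right one---namely that it is precisely the mass function whose induced belief is the natural extension~\eqref{def:belmob}, so that its core is the full interval credal set rather than a proper subset, which is exactly what fails in Example~\ref{ex:counterexbound}. Your argument also generalises verbatim to $\underline{p}^B_j \le \min_{q\in\mathcal{C}_A}\sum_i f(a_i)q_i$ for any $[0,1]$-valued $f$, whereas the paper's computation is tailored to the specific SGM focal-set structure.
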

\begin{proof}
    We start by proving (\ref{ineq:lowbound}). Fix $j\in \{1,\dots, m\}$. Without loss of generality let us order $\{\underline{p}^i_j\}_{i=1}^n$ increasingly:  
    \begin{equation*}
        \underline{p}^1_j\le\dots\le \underline{p}^n_j 
    \end{equation*}
    We can rewrite the minimum in (\ref{ineq:lowbound}) as:
    \begin{equation*}
        \min_{\mathcal{C}} \sum_{i=1}^n \mathbb{P}(b_j|a_i)\mathbb{P}(a_i) = \min_{\substack{\underline{p}^A_i\le x_i\le \overline{p}^A_i\\ \sum_{i=1}^n x_i=1}} \sum_{i=1}^n \underline{p}^i_j x_i.
    \end{equation*}
    Since by \hyperref[condcoh1]{[Coh1]} it holds that
    \begin{equation*}
        \sum_{i=1}^n \underline{p}^A_i\le 1\le \sum_{i=1}^n \overline{p}^A_i\,,
    \end{equation*} 
    there exists a least index $\Tilde{i}$ such that
    \begin{equation*}
        \overline{p}^A_1+\dots+\overline{p}^A_{\Tilde{i}-1}+x_{\Tilde{i}}+\underline{p}^A_{\Tilde{i}+1}+\dots+ \underline{p}^A_n=1
    \end{equation*}
     where $x_{\Tilde{i}}\in [\underline{p}^A_{\Tilde{i}},\overline{p}^A_{\Tilde{i}}]$ is unambiguously determined.
    In order to achieve the minimum in (\ref{ineq:lowbound}) the values of $x_i$ need to be as high as possible when $\underline{p}^i_j$ is small and vice versa. Therefore, the vector $(x_i)_{i=1}^n$ that realizes the minimum is $(\overline{p}^A_1,\dots,\overline{p}^A_{\Tilde{i}-1},x_{\Tilde{i}},\underline{p}^A_{\Tilde{i}+1},\dots, \underline{p}^A_n)$, where $\Tilde{i}$ is taken as small as possible.

    Let us rewrite inequality (\ref{ineq:lowbound}) using (\ref{eq:lowboundB}) and the definition of standard good mass:
    \begin{align*}
        \sum_{i=1}^n \underline{p}^A_i\underline{p}^i_j + \sum_{i=1}^n m_A(E_A\smallsetminus\{a_i\})\prod_{h\ne i}\underline{p}_j^h +\Delta_A\prod_{i=1}^n\underline{p}_j^i\le\\
        \sum_{i<\Tilde{i}}\overline{p}^A_i\underline{p}^i_j +x_{\Tilde{i}}\underline{p}_j^{\Tilde{i}}+\sum_{i>\Tilde{i}}\underline{p}^A_i\underline{p}^i_j.
    \end{align*}
    Therefore, we need to prove that:
    \begin{align*}
        \sum_{i<\Tilde{i}}(\overline{p}^A_i-\underline{p}^A_i)\underline{p}^i_j + (x_{\Tilde{i}}-\underline{p}^A_{\Tilde{i}})\underline{p}_j^{\Tilde{i}} &\ge \\ \sum_{i=1}^n m_A(E_A\smallsetminus\{a_i\})\prod_{h\ne i}\underline{p}_j^h +\Delta_A\prod_{i=1}^n\underline{p}_j^i &=: \star.
    \end{align*}
    Let us note that due to the condition of coherency \hyperref[condcoh2]{[Coh2]}, it follows that $\Tilde{i}\ge 2$ for every $n\ge 3$. Using that $\underline{p}^1_j\ge \prod_{h\ne i}\underline{p}^h_j$ for all $i\ne 1$ and $\underline{p}^2_j\ge \prod_{h\ne 1}\underline{p}^h_j$, we get
    \begin{equation*}
        \star \le \underline{p}^1_j\left(\sum_{i\ne 1} m_A(E_A\smallsetminus\{a_i\}) + m_A(E_A)\right)+\underline{p}^2_j m_A(E_A\smallsetminus\{a_1\})
    \end{equation*}
    Let us observe that it follows from the definition of standard good mass that
    \begin{equation*}
        \sum_{i\ne 1} m_A(E_A\smallsetminus\{a_i\}) + m_A(E_A) =\overline{p}^A_1-\underline{p}^A_1.
    \end{equation*}
    Therefore, it is enough to prove that
    \begin{equation*}
        \sum_{2\le i<\Tilde{i}}(\overline{p}^A_i-\underline{p}^A_i)\underline{p}^i_j + (x_{\Tilde{i}}-\underline{p}^A_{\Tilde{i}})\underline{p}_j^{\Tilde{i}}\ge \underline{p}^2_j m_A(E_A\smallsetminus\{a_1\})
    \end{equation*}
    and this inequality immediately follows from the fact that
    \begin{align*}
        \sum_{2\le i<\Tilde{i}}(\overline{p}^A_i-\underline{p}^A_i) +  (x_{\Tilde{i}}-\underline{p}^A_{\Tilde{i}})&=1-\underline{S}_A-(\overline{p}^A_1-\underline{p}^A_1)\\&=m_A(E_A\smallsetminus\{a_1\}).
    \end{align*}
    
    The proof for (\ref{ineq:upbound}) follows a procedure similar to the one just presented. 
    It is also easy to check that (\ref{ineq:lowbound}) and (\ref{ineq:upbound}) also hold for the particular case $n=2$.
\end{proof}
This result is straightforwardly extended to chains of arbitrary length. Indeed, if we consider the chain $A\to B\to C$ where $E_A=\{a_1,\dots,a_n\}, E_B=\{b_1,\dots, b_m\}, E_C=\{c_1,\dots, c_l\}$, and 
\begin{align*}
    &\underline{p}^A_i\le \mathbb{P}(A=a_i)\le \overline{p}^A_i\\
    &\underline{p}^i_j\le \mathbb{P}(B=b_j|A=a_i)\le \overline{p}^i_j \\
    &\underline{p}^j_k\le \mathbb{P}(C=c_k|B=b_j)\le \overline{p}^j_k,\nonumber
\end{align*}
it also holds for all $k=1,\dots, l$ that 
\begin{equation}
    \underline{p}^C_k\le \min \sum_{i,j}\mathbb{P}(a_i)\mathbb{P} (b_j|a_i)\mathbb{P}(c_k|b_j)\,,  \label{eqglobalteo}
\end{equation}
and similarly for the upper bound $\overline{p}^C_k$. The quantity $\underline{p}^C_k$ is obtained by applying~\eqref{eq:lowboundB},~\eqref{eq:upboundB} twice---first to obtain $\smash{(\underline{p}^B,\overline{p}^B)}$, then to combine \emph{that} with the $\smash{(\underline{p}^j_k, \overline{p}^j_k)}$---whence it follows from Theorem~\ref{th:bound} that
\begin{equation*}
    \underline{p}^C_k\le \min_{\substack{\underline{\mathbb{P}}(b_j)\le \mathbb{P}(b_j) \le \overline{\mathbb{P}}(b_j)\\ \sum_j \mathbb{P}(b_j)=1 }} \sum_j \mathbb{P}(b_j)\underline{p}^j_k=:\diamond
\end{equation*}
with $\underline{\mathbb{P}}(b_j)$ and $\overline{\mathbb{P}}(b_j)$ as before. Moreover, we clearly also have that $\diamond \leq \min \sum_{i,j}\mathbb{P}(a_i)\mathbb{P} (b_j|a_i)\underline{p}^j_k$,
which is all that is required to obtain~\eqref{eqglobalteo}.
A similar argument can be applied to prove that the bound holds for $\overline{p}^C_k$ and for chains of arbitrary length. 

\subsection{Fixing bad probability intervals} \label{par:fixingintervals}
Suppose we are considering a CN $X_1\rightarrow \dots \rightarrow X_k$ and suppose we are given good probability intervals for the random variable $X_1$ and for each conditional distribution $X_l|(X_{l-1}=x)$ of every step of the chain. Then, we may apply formulas (\ref{eq:lowboundB}) and (\ref{eq:upboundB}) step-by-step starting from $X_1$ to eventually infer a probability interval for $X_k$, $(\underline{p}^{X_k},\overline{p}^{X_k})$. At each step, after applying \eqref{eq:lowboundB} and \eqref{eq:upboundB}, we revert the probability interval into a mass function and proceed along the chain. We call this process \textit{belief inference} on the credal network $X_1\to\dots\to X_k$.
This probability interval on $X_k$  is the same as the one associated with the mass function 
\begin{equation}
 \label{eq:fullmass}
(m_{X_1}\otimes m_{X_2|X_1}\otimes \dots \otimes m_{X_k|X_{k-1}})^{\downarrow X_k},
\end{equation}
thanks to the local computation property \cite{Shenoy2023} of mass functions, which states that
\begin{align*}
    (m_{X_1}\otimes m_{X_2|X_1}\otimes m_{X_3|X_2})^{\downarrow (X_2,X_3)}=\\(m_{X_1}\otimes m_{X_2|X_1})^{\downarrow X_2}\otimes m_{X_3|X_2}.
\end{align*}
Directly computing \eqref{eq:fullmass} is computationally expensive as $m_{X_1,\dots, X_k}$ gives mass to an exponential number of focal sets namely $\mathcal{O}(2^{kn})$, where $|X_i|=n$ for all $i=1,\dots, k$. 

The step-by-step process we described requires only $\mathcal{O}(kn^3)$ operations when employing standard good masses. However, this method is not always applicable since there is no guarantee that, after one step of inference, we can find a mass function associated to the inferred probability interval. 
One of the possible solutions to this issue is to consider the closest good outer approximation to a bad probability interval. 

If we are considering the credal network $A\to B$ and we are given a bad probability interval for the random variable $A$, a simple way to fix $(\underline{p}^A,\overline{p}^A)$ consists in slightly enlarging the intervals, by distributing the quantity $-\Delta_A = (n-1) -\overline{S}_A - (n-2)\underline{S}_A>0$ over the $\{\overline{p}^A_i\}_{i=1}^n$; it follows from Equation~\eqref{eq:def_good_delta} that the resulting interval will be good.
The bottom line of this approach is that we are bartering precision for computational speed and tractability. 

Fixing a bad probability interval for $A$ does not break inequalities (\ref{ineq:lowbound}) and (\ref{ineq:upbound}), as the following corollary states.
\begin{corollary}
    If $(\underline{p}^A, \overline{p}^A)$ is a bad probability interval and $m_A$ is the standard good mass associated to an arbitrary fixing of $(\underline{p}^A, \overline{p}^A)$, then inequalities (\ref{ineq:lowbound}) and (\ref{ineq:upbound}) still hold.
\end{corollary}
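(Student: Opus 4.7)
The plan is to leverage Theorem~\ref{th:bound} directly by observing that any valid \emph{fixing} of a bad probability interval is necessarily an outer approximation in the sense that its feasible set of probability measures contains the original one. Once this monotonicity is in place, the corollary follows from a one-line set-inclusion argument applied to the optimization problems defining the credal bounds.

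First I would make explicit what a fixing is: the procedure described in Section~\ref{par:fixingintervals} produces a new coherent interval $(\underline{p}'^A,\overline{p}'^A)$ with $\underline{p}'^A_i\le \underline{p}^A_i$ and $\overline{p}'^A_i\ge \overline{p}^A_i$ for every $i$, and moreover this new interval is \emph{good}. (In the concrete scheme suggested in the excerpt the lower bounds are left unchanged and the slack $-\Delta_A>0$ is distributed across the upper bounds, but any fixing consistent with the spirit of an outer approximation will satisfy these inclusions.) Let $m_A$ denote the standard good mass associated to $(\underline{p}'^A,\overline{p}'^A)$, and let $\mathcal{C}'$ be defined exactly as $\mathcal{C}$ but with $(\underline{p}^A,\overline{p}^A)$ replaced by $(\underline{p}'^A,\overline{p}'^A)$.

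Second, from the pointwise interval inclusion $[\underline{p}^A_i,\overline{p}^A_i]\subseteq[\underline{p}'^A_i,\overline{p}'^A_i]$, the constraints on $\mathbb{P}(B=b_j\vert A=a_i)$ being unchanged and the normalization constraint being common, we immediately obtain $\mathcal{C}\subseteq\mathcal{C}'$. Applying Theorem~\ref{th:bound} to the good interval $(\underline{p}'^A,\overline{p}'^A)$ and its SGM $m_A$ yields
\begin{align*}
\underline{p}^B_j &\le \min_{\mathcal{C}'}\sum_{i=1}^n \mathbb{P}(b_j\vert a_i)\mathbb{P}(a_i)\le \min_{\mathcal{C}}\sum_{i=1}^n \mathbb{P}(b_j\vert a_i)\mathbb{P}(a_i),\\
\overline{p}^B_j &\ge \max_{\mathcal{C}'}\sum_{i=1}^n \mathbb{P}(b_j\vert a_i)\mathbb{P}(a_i)\ge \max_{\mathcal{C}}\sum_{i=1}^n \mathbb{P}(b_j\vert a_i)\mathbb{P}(a_i),
\end{align*}
where the second inequality in each line is just the elementary fact that minimizing (resp. maximizing) over a smaller feasible set can only increase (resp. decrease) the optimum. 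This is precisely (\ref{ineq:lowbound}) and (\ref{ineq:upbound}) for the original bad interval.

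There is essentially no hard step; the only place where one could slip is in the definition of ``arbitrary fixing''. I would therefore state up front, as part of the proof, that the word \emph{fixing} is to be understood as producing a good outer approximation of the original interval, so that the set inclusion $\mathcal{C}\subseteq\mathcal{C}'$ holds by construction. With that convention, the corollary reduces to combining Theorem~\ref{th:bound} with this monotonicity of feasible sets.
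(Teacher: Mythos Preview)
Your argument is correct and is precisely the ``straightforward'' reasoning the paper alludes to; in fact the paper omits the proof entirely due to space constraints, so there is nothing to compare against beyond the authors' implicit claim that it is routine. Your set-inclusion argument $\mathcal{C}\subseteq\mathcal{C}'$ combined with Theorem~\ref{th:bound} is exactly what is needed, and your caveat about the meaning of ``fixing'' (outer approximation, which in the paper's concrete scheme only enlarges the upper bounds) is well placed.
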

The proof of this result is straightforward and omitted due to page limit constraints. 

If some probability interval for the conditional random distributions $B\mid(A=a_i)$ are bad, we may compute \(\underline{p}_j^B\) and \(\overline{p}_j^B\) using expressions (\ref{eq:lowboundB}) and (\ref{eq:upboundB}), treating the bad probability intervals as if they were good. This occurs because increasing the upper bounds \(\{\overline{p}^i_j\}_{j=1}^m\) to fix \((\underline{p}^i, \overline{p}^i)\) does not affect \(\underline{p}_j^B\) in expression (\ref{eq:lowboundB}). When calculating $\overline{p}^B_j$ for a fix $j$, there always exists a fixing for every $(\underline{p}^i, \overline{p}^i)$ such that $\overline{p}^i_j$ remains unchanged for all $i=1,\dots, n$. In order to prove this statement, it is enough to show that we can always distribute $-\Delta_i=(m-1)-\overline{S}_i-(m-2)\underline{S}_i$ over $\{\overline{p}^i_k\}_{k\ne j}$. Each $\overline{p}^i_k$ can increase by at most $r^i_k=1-\underline{S}_i-(\overline{p}^i_k - \underline{p}^i_k)$ to preserve the condition of coherency \hyperref[condcoh3]{[Coh3]}. Therefore
\begin{align*}
    \sum_{k\ne j} r^i_k&=\sum_{k\ne j} (1-\underline{S}_i-\overline{p}^i_k+\underline{p}_k^i)\\
    &=(m-1)-(m-1)\underline{S}_i-\overline{S}_i+\overline{p}^i_j+\underline{S}_i-\underline{p}^i_j\\
    &=(m-1)-\overline{S}_i-(m-2)\underline{S}_i+\overline{p}^i_j-\overline{p}^i_j\\
    &=-\Delta_i+\overline{p}^i_j-\overline{p}^i_j\ge -\Delta_i.
\end{align*}
This confirms that if we are provided with any coherent probability interval for \(B \mid (A = a_i)\) for all \(i\), inference can be performed without fixing these intervals into good ones.

\paragraph{Ad-hoc fixing} 
Consider the credal network $A\to B$ where we are given a bad probability interval for the random variable $A$, $(\underline{p}^A,\overline{p}^A)$, and coherent probability intervals for the conditional distributions, namely $\{(\underline{p}^i,\overline{p}^i)\}_{i=1}^n$. 
Since inequality (\ref{ineq:lowbound}) holds for every fixing of $(\underline{p}^A,\overline{p}^A)$, we shall consider the distribution of $-\Delta_A$ that maximizes the value of $\underline{p}_j^B$ thus minimizing the width of the inferenced interval. Let \( t = \{t_i\}_{i=1}^n \) be the values added to \(\{\overline{p}^A_i\}_{i=1}^n\) to transform the bad probability interval into a good one. It necessarily holds that $0\le t_i\le r_i^A$ and $\sum_i t_i=-\Delta_A$. Let $\underline{t}^{*j}$ be the distribution of such values that maximizes $\underline{p}_j^B$, namely
\begin{align}
    \underline{t}^{*j}&= \argmax_{\substack{0\le t_i\le r_i^A\\ \sum_i t_i=-\Delta_A}} \sum_{i=1}^n \left( \Bigl(1-(\overline{p}^A_i +t_i)-\sum_{h\ne i}\underline{p}^A_h\Bigr)\prod_{h\ne i}\underline{p}^h_j\right)\nonumber\\
    &=\argmin_{\substack{0\le t_i\le r_i^A\\ \sum_i t_i=-\Delta_A}} \sum_{i=1}^n \left(t_i\prod_{h\ne i} \underline{p}^h_j\right).\label{eq:tstatrlow}
\end{align}
  Similarly, we shall define $\overline{t}^{*j}$, the distribution over $\{\overline{p}^A_i\}_{i=1}^n$ that minimizes $\overline{p}^B_j$.
\begin{equation}
    \overline{t}^{*j}=\argmin_{\substack{0\le t_i\le r_i^A\\ \sum_i t_i=-\Delta_A}} \sum_{i=1}^n \left(t_i\prod_{h\ne i} (1-\overline{p}^h_j)\right).
\end{equation}
By first fixing $\smash{(\underline{p}^A, \overline{p}^A)}$ with $\underline{t}^{*j}$ to compute $\underline{p}^B_j$, and then fixing $(\underline{p}^A, \overline{p}^A)$ with $\overline{t}^{*j}$ to compute $\overline{p}^B_j$, we obtain the smallest interval — among those generated using the standard good mass — for $\mathbb{P}(B = b_j)$. We shall note that inequalities (\ref{ineq:lowbound}) and (\ref{ineq:upbound}) remain valid, as they hold for any possible fixing of the probability interval $(\underline{p}^A, \overline{p}^A)$.
This operation can be repeated for each \(j\), resulting in a probability interval \((\underline{p}^B, \overline{p}^B)\) for \(B\). We refer to this method of fixing \((\underline{p}^A, \overline{p}^A)\) to perform inference as \textit{ad-hoc fixing}.
\begin{example}\label{ex:urn}
    Consider an urn with $N=100$ balls coloured in red (R), green (G), blue (B) or yellow (Y). We do not know the proportion of balls for each colour and we have the following estimates:
    \begin{align*}
        6\le N(R) \le33 &\qquad 10\le N(G) \le42 \\
        15\le N(B) \le32 &\qquad 25\le N(Y) \le 58.
    \end{align*}
    We possess four coins, one for each colour, which we know are unbalanced yet we are still uncertain about the exact mean value:
    \begin{align*}
        0.8\le \mathbb{P}(H|col=R)\le 0.9 &\quad 0.6\le \mathbb{P}(H|col=G)\le 0.7\\
        0.4\le \mathbb{P}(H|col=B)\le 0.5 & \quad 0.2\le \mathbb{P}(H|col=Y)\le 0.3.
    \end{align*}
    We sample uniformly at random a ball from the urn and flip the associated coin. The inference task of calculating the lower probability of getting heads (H) can be computed using belief inference with ad-hoc fixing. The probability interval associated with the random variable $col$ representing the colour of the sampled ball is 
    \begin{align*}
        0.06\le \mathbb{P}(R) \le0.33 &\qquad 0.1\le \mathbb{P}(G) \le0.42 \\
        0.15\le \mathbb{P}(B) \le0.32 &\qquad 0.25\le \mathbb{P}(Y) \le 0.58.
    \end{align*}
    This probability interval is bad ($\Delta= - 0.23$) thus requiring fixing. We can widen this probability interval by at most $r=(0.17,0.12,0.27,0.11)$ while preserving the conditions of coherency. The values of $\prod_{h\ne i} \underline{p}^h_j$ in \eqref{eq:tstatrlow} are $(0.048, 0.064,0.096,0.192)$. 
   Therefore, the best way to fix the probability interval for $col$ in order to maximize $\underline{p}_H$, the lower probability of getting heads, is 
    \( \underline{t}^*_H=(0.17,0.06,0,0)\) and we find 
    \[
    \underline{p}_H \approx 0.253  \le 0.328=\underline{\mathbb{P}}(H)
    \]
     where this last value represents the lower probability obtained via classical sensitivity analysis.
     \hfill$\diamond$
\end{example}
\paragraph{Ad-hoc mass functions} 
Recalling that our goal is to estimate credal intervals as accurately as possible, one potential approach is to explore alternatives beyond relying solely on the standard good mass. Specifically, we could determine the mass function on $A$ that generates the smallest possible probability interval for $B$. For a fixed $j=1,\dots,m$, we may consider the following mass function on $A$:
\begin{equation}
    \underline{m}^{max}_{j}:=\argmax_{\substack{0\le m_A(V)\le 1\\ Bel_{m_A}(\{a_i\})=\underline{p}^A_i \\ Pl_{m_A}(\{a_i\})=\overline{p}^A_i} } \sum\limits_{V\in 2^{E_{A}}} \left( m_{A}(V)\prod_{i: x_i\in V} \underline{p}^i_j \right).
\end{equation}
This mass function yields the largest possible value for \(\underline{p}^B_j\). 
However, as mentioned at the beginning of Section~\ref{sec:body}, there is no guarantee that this value of \(\underline{p}^B_j\) is always less than the lower bound of the corresponding credal interval, given by $\min\limits_{\underline{p}^A_i\le x_i\le \overline{p}^A_i, \sum_i x_i =1} \sum_{i=1}^n \underline{p}^i_j x_i$. Example~\ref{ex:counterexbound} shows one such example. 

Referring to the proof of Theorem~\ref{th:bound}, we observe that the use of the standard good mass ensures that every set $V$ with $|V|\ge 2$, always contains either $\{a_1\}$ or $\{a_2\}$. This property does not hold for an arbitrary mass function on $A$, preventing the use of the inequality $\prod_{i:a_i\in V}\underline{p}^i_j\le \underline{p}^2_j$, which is fundamental to complete the proof of Theorem~\ref{th:bound}. It is important to recall that by $\underline{p}^2_j$ we mean the second smallest value among $\{\underline{p}_j^i\}_{i=1}^n$.

For instance, in Example~\ref{ex:counterexbound}, the set $\{a_3,a_4\}$ is a focal set for $m_A$ and it holds that $\underline{p}^3_j \underline{p}^4_j $ is greater than $\underline{p}^2_j$. 

Therefore, if we assign zero mass to sets $V$ with $|V|\ge 2$ such that $\prod_{i:a_i\in V}\underline{p}^i_j> \underline{p}^2_j$,  inference using this mass function on $A$ always guarantees that $\underline{p}^B_j$ is smaller than the corresponding credal lower bound. 

Let us introduce $\underline{m}^*_j$, a mass function on $A$ designed to ensure that inequality (\ref{ineq:lowbound}) holds:
\begin{equation}\label{eq:mstardown}
    \underline{m}^{*}_{j}:=\argmax_{\mathclap{\substack{0\le m_A(V)\le 1\\ Bel_{m_A}(\{a_i\})=\underline{p}^A_i \\ Pl_{m_A}(\{a_i\})=\overline{p}^A_i \\ m_A(V)=0 \text{ if } |V|\ge 2 \land \prod\limits_{i:a_i\in V}\underline{p}^i_j> \underline{p}^2_j }}} \quad\sum\limits_{V\in 2^{E_{A}}} \left( m_{A}(V)\prod_{i: x_i\in V} \underline{p}^i_j \right).
\end{equation}

Similarly, we may define a mass function that minimizes the upper bound $\overline{p}_j^B$ while ensuring that inequality \eqref{ineq:upbound} holds:
\begin{equation}\label{eq:mstarup}
    \overline{m}^{*}_{j}:=\argmax_{\mathclap{\substack{0\le m_A(V)\le 1\\ Bel_{m_A}(\{a_i\})=\underline{p}^A_i \\ Pl_{m_A}(\{a_i\})=\overline{p}^A_i \\ \qquad m_A(V)=0 \text{ if } |V|\ge 2 \land  \prod\limits_{i:a_i\in V}(1-\overline{p}^i_j)> 1-\overline{p}^2_j }}} \quad\sum\limits_{V\in 2^{E_{A}}} \left( m_{A}(V)\prod_{i: x_i\in V} (1-\overline{p}^i_j) \right).
\end{equation}
We refer to these mass functions on $A$ as the \textit{ad-hoc mass}  functions for computing the interval $(\underline{p}_j^B,\overline{p}^B_j)$.
The operation of inferencing with ad-hoc mass functions can be repeated for each \(j\), resulting in a probability interval \((\underline{p}^B, \overline{p}^B)\) for \(B\). Practically, computing each of the $2m$ ad-hoc mass functions consists of solving a linear programming problem.
\begin{example}
    Let us try to use ad-hoc mass functions to compute the lower probability of getting heads in Example \ref{ex:urn}. Solving the linear programming problem of \eqref{eq:mstardown} we obtain
    \begin{gather*}
        \underline{m}^*_H(\{R\})=0.06 \quad \underline{m}^*_H(\{G\})=0.1 \quad \underline{m}^*_H(\{B\})=0.15\\
       \underline{m}^*_H(\{Y\})=0.25 \quad \underline{m}^*_H(\{R,B\})=0.054\\
       \underline{m}^*_H(\{G,B\})=0.048 \quad \underline{m}^*_H(\{R,G,B\})=0.138\\
       \underline{m}^*_H(\{R,G,Y\})=0.174 \quad \underline{m}^*_H(\{R,G,B,Y\})=0.006.
    \end{gather*}
    The lower probability calculated using the ad-hoc mass function $\underline{m}^*_H$ in \eqref{eq:lowboundB} is $0.2861$, which is a slight improvement over the ad-hoc fixing method.
    \hfill$\diamond$
\end{example}
Let us note that belief inference with ad-hoc mass functions may generate probability intervals that cannot be represented by an ad-hoc mass function, as the argmax in definitions \eqref{eq:mstardown} and \eqref{eq:mstarup}  might be taken over unfeasible constraints. A practical solution to this issue is presented in the following section.
\section{Numerical analysis} \label{sec:vacanalysis}
In Section~\ref{subsec:bound} it was shown that, under specific assumptions, intervals generated using belief inference are more conservative then their credal counterparts. Here, we aim to address the question of how much larger these intervals become. 

Let us present some numerical results about the chain $X_1\to \dots \to X_k$. In order to perform numerical simulations on the chain, we fixed the cardinality of each random variable $|X_i|=n$ and sampled uniformly at random (using a hit\&run algorithm~\cite{SmithHnR1984}) $n(k-1)+1$ probability intervals; one for each conditional distribution $X_l|(X_{l-1}=x_{i}), l=2,\dots,k \text{ and } i=1,\dots,n$, and one for $X_1$. 
Then, we implemented different methods to perform inference, namely credal inference, belief inference with ad-hoc mass functions and belief inference using the ad-hoc fixing method. We compared the results of these simulations using the average width of inferenced intervals as a metric.

\begin{figure}[ht]
    \centering
    \includegraphics[width=\linewidth]{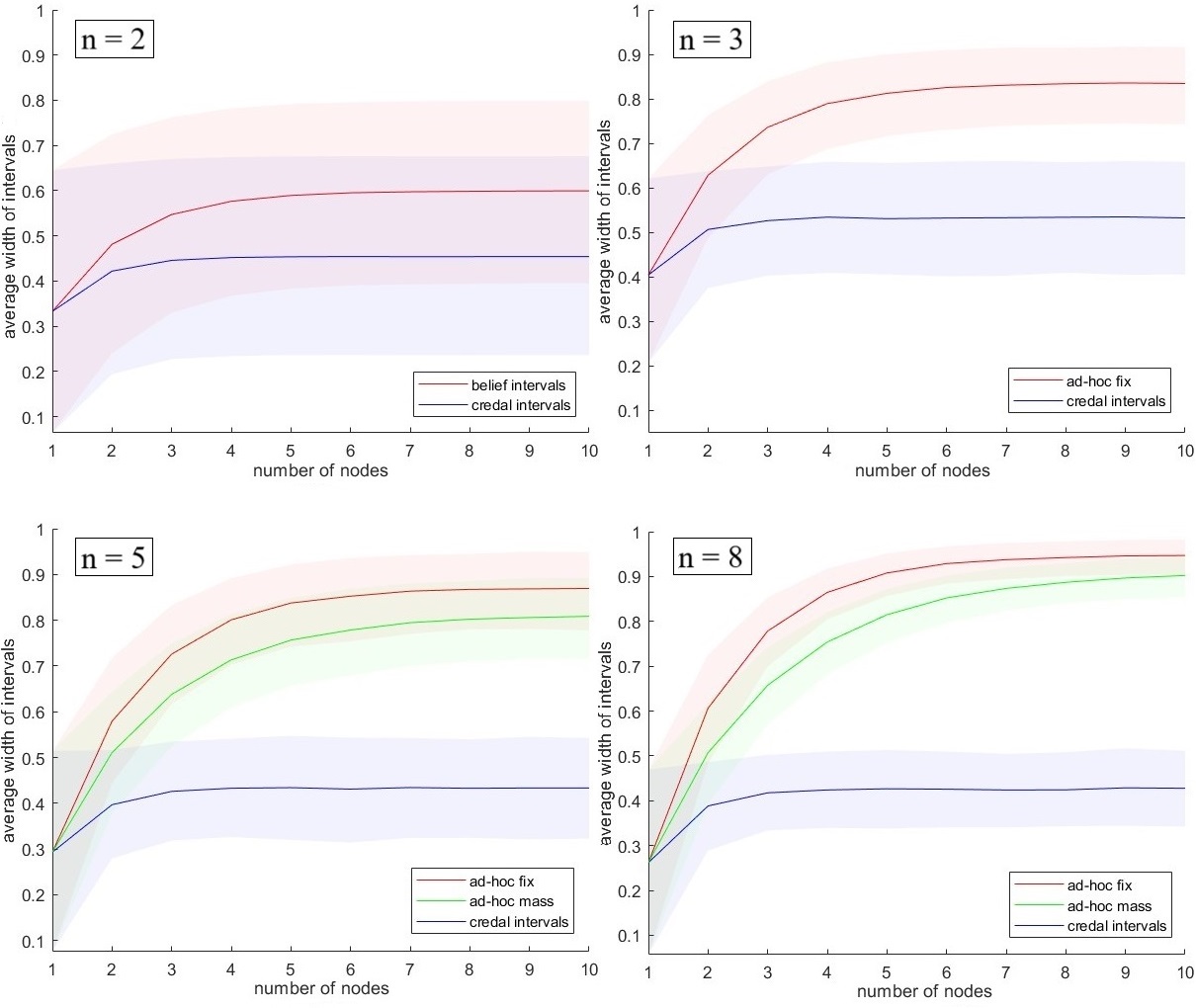}
    \caption{\textbf{Evolution of the average width of intervals} as inference progresses along the chain $X_1\to \dots \to X_{10}$, with each graph corresponding to a different cardinality $n$ of random variables. The colours represent different inference methods: blue for credal intervals, red for belief inference with standard good masses and ad-hoc fixing, green for belief inference with ad-hoc mass functions. Each coloured region contains the width values of 75\% of inferenced intervals. }
    \label{fig:goodsetschain}
\end{figure}

Figure~\ref{fig:goodsetschain} illustrates the average width of intervals as the inference process, using coherent probability intervals sampled uniformly at random, advances along the chain. Let us first note that when $n=2$ or $n=3$ the two belief inference methods we presented coincide, as the mass function associated to a probability interval is then unique (if it exists).
We observe that the process of belief inference becomes increasingly uninformative as the cardinality of the random variables grows. Already with $n=5$ belief inference generates intervals that are twice as large as their credal counterparts, leading to a significant loss of precision.

On the other hand, in the binary case ($|X_i|=2$ for all random variables), we observe that intervals generated using belief inference are asymptotically about $40\%$ larger than their credal counterparts. Moreover, after a single step of inference the average enlargement is approximately $0.0592$; equivalent to $19\%$ of the credal interval, as shown in Table~\ref{table:enlarge}. These observations suggest that belief inference may be effectively applied to more complex binary credal networks with minimal information loss.
\begin{table}[ht]
\centering
\begin{tabular}{@{} l *{6}{c} @{}}
\toprule
\textbf{n} & 2 & 3 & 4 & 5 & 6 & 8 \\ 
\midrule
\small RIW ad-hoc fix 
& 19\% & 26\% & 39\% & 49\% & 55\% & 60\% \\ 
\addlinespace
\small RIW ad-hoc mass 
& 19\% & 26\% & 29\% & 30\% & 31\% & 32\% \\
\bottomrule
\end{tabular}
\caption{\textbf{Approximate Relative Increase in Width (RIW)} after one step of inference in the CN $A\to B$, with $|A|=|B|=n$, comparing intervals generated using belief inference and credal inference.}
\label{table:enlarge}
\end{table}

It is worth noting that the average width of a coherent probability interval can be quite large, meaning that sampling them uniformly at random may not accurately represent real-world scenarios where intervals tend to be narrower. To address this limitation, the following section examines the performance of belief inference when restricted to smaller intervals.

\subsection{Using small intervals}
This section focuses on the analysis of belief inference using conditional and prior probability intervals with limited width. 

We empirically observe that this setup, when using standard good masses, generates a higher number of bad probability intervals as we advance along the chain, thus requiring fixing in order to proceed with the inference process. 
This leads to larger intervals, as we outer-approximate a bad probability interval with a good one, requiring additional steps to reach asymptotic behaviour.
This phenomenon is illustrated in Figure~\ref{fig:comparisonuptoeps}.
\begin{figure}[ht]
    \centering
\includegraphics[width=0.9\linewidth]{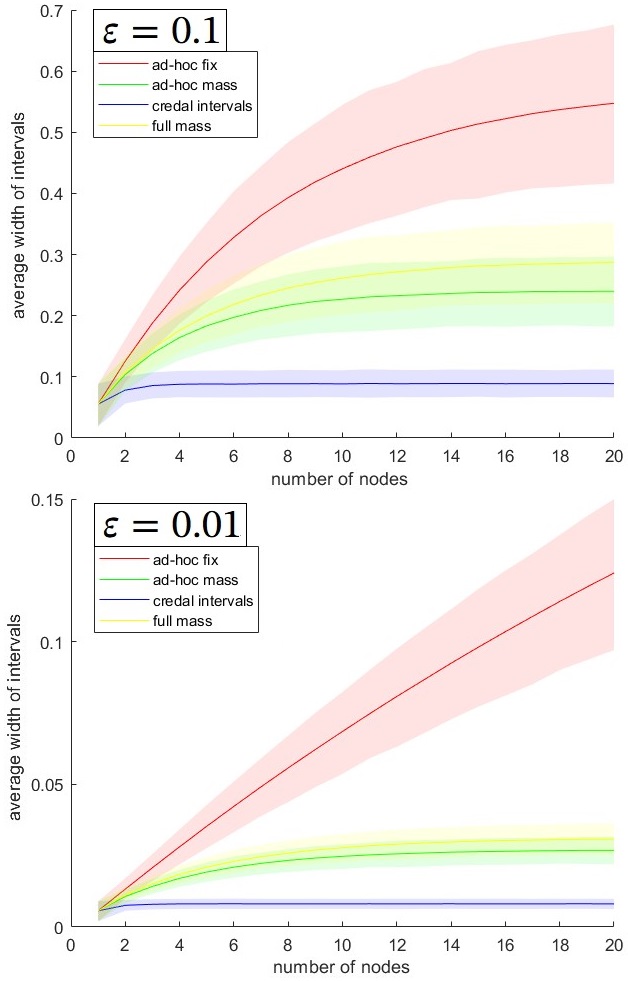}
    \caption{\textbf{Evolution of the average width of intervals} as we make inference along the chain $X_1\to \dots \to X_{20}$, where $|X_i|=5$ for all $i$ and we vary the value of $\varepsilon$, the largest possible width of a conditional interval. Each coloured region contains the width values of 75\% of inferenced intervals.}
    \label{fig:comparisonuptoeps}
\end{figure}
Furthermore, in this setup it is highly likely that belief inference with ad-hoc mass function generates probability intervals that cannot be represented by an ad-hoc mass function, as mentioned in Section \ref{par:fixingintervals}. In our experiments we addressed this issue by uniformly enlarging the probability interval until becoming representable with a mass function. It has been  empirically observed  that this enlargement is often minimal and not impactful in the overall performance.

Looking at Figure~\ref{fig:comparisonuptoeps}, we observe that the ad-hoc fixing method performs significantly worse than other analysed inference approaches. The need to enlarge intervals to proceed with inference lead to excessively wide intervals and it is a great limitation of this method. 

On the other hand, we observe that belief inference with ad-hoc mass functions ensures that the asymptotic width of intervals almost never exceeds $3\varepsilon$, keeping intervals narrow and representative of the inherent uncertainty.  This result suggest that this approach might be extended to more complex graphical models without a significant loss of information.

Inferencing keeping the full mass function without converting it into a probability interval has shown average worse performance than the ad-hoc mass approach.
Most importantly, this method does not guarantee the bounds stated in Theorem \ref{th:bound}. In other words, probability intervals produced by this approach lack any inherent connection to credal intervals, which prevents us from assigning them a meaningful interpretation.
\section{Conclusion}
This paper explored belief inference in credal networks through Dempster-Shafer theory, proposing a framework for uncertainty propagation. Key contributions include the formalization of belief-based methods, novel computational techniques for marginal lower and upper probabilities, and theoretical bounds with standard credal inference. While the theoretical framework guarantees conservative intervals, numerical results reveal significant limitations. Belief inference methods often yield overly conservative intervals, especially in scenarios with higher cardinality.  
 Belief inference in binary chains and the ad-hoc mass function approach appear to be the only frameworks that reliably ensure a close outer-approximation of credal intervals in chains.
It remains open whether this will carry on to more complex graphical models, where achieving precise approximations is often more challenging, in view of the growth in complexity for algorithms on non-binary credal networks.
\appendix
\section{Appendix}
\begin{proof}[Proof of Theorem~\ref{th:inferenceintervals}]
Let us first observe that focal sets of $m_{B|A}$ have the form of $\bigcup_{i=1}^n (a_i\times T^i)$ where $T^i\in 2^{E_B}$ for all $i=1,\dots,n$. It holds that
    \begin{equation*}
        m_{B|A}\left(\bigcup_{i=1}^n (a_i\times T^i)\right)=\prod_{i=1}^n m_{B_{a_i}}(T^i).
    \end{equation*}
    It follows that focal sets of $m_{A,B}$ have the form $\bigcup_{i:a_i\in V} (a_i\times T^i)$ for all $V\in 2^{E_A}$ and $T^i\in 2^{E_B}$. Recalling that $m_{A,B}=m_A\otimes \left(\bigotimes_i m_{B|a_i}\right)$ and applying Dempster's rule of combination, we obtain
    \begin{align*}
       & m_{A,B}\left(\bigcup_{i:a_i\in V} (a_i\times T^i)\right)=\\
       &=m_A(V)\sum_{h:a_h\notin V} \left(\sum_{T^h\in 2^{E_B}} m_{B|A}\left(\bigcup_{i=1}^n (a_i\times T^i)\right)\right)\\
       &=m_A(V)\sum_{h:a_h\notin V} \left(\sum_{T^h\in 2^{E_B}} \prod_{i=1}^n m_{B_{a_i}}(T^i) \right) \\
        &=m_A(V)\prod_{i:a_i\in V}m_{B_{a_i}}(T^i),
    \end{align*}
    where the last equality follows from the fact that $\sum_{T^h\in 2^{E_B}}m_{B_{a_h}}(T^h)=1$ which implies that indices $h:a_h\notin V$ disappear from the product.
   Expression (\ref{eq:lowboundB}) immediately follows from the definitions, and we get
    \begin{align*}
        \underline{p}^B_j&=Bel_{m_B}(\{b_j\})=m_B(\{b_j\})=\\
        &=\sum_{V\in 2^{E_A}}m_{A,B}(V\times \{b_j\})\\
        &=\sum_{V\in 2^{E_A}}  m_A(V)\prod_{i: a_i\in V} m_{B_{a_i}}(\{b_j\}) \\
        &= \sum\limits_{V\in 2^{E_A}}  m_A(V)\prod_{i: a_i\in V} \underline{p}^i_j .
    \end{align*}
The proof for (\ref{eq:upboundB}) is similar: the key is to prove that
\begin{equation*}
    Bel_{m_B}(E_B\smallsetminus\{b_j\})=\sum\limits_{V\in 2^{E_A}}  m_A(V)\prod_{i: a_i\in V} (1-\overline{p}^i_j) 
\end{equation*}
which follows from $Bel_{m_{B_{a_i}}}(E_B\smallsetminus\{b_j\})=1-\overline{p}^i_j$.
\end{proof}
\begin{proof}[Proof of Theorem~\ref{th:mobius}]
    The fact that $m_A^M(\{a_i\})=\underline{p}^A_i$ for all $i=1,\dots, n$ follows immediately from the condition of coherency \hyperref[condcoh2]{[Coh2]}.
    To prove that every subset with cardinality $2\le k \le n-2$ has zero mass we use strong induction on $k$. If $k=2$, then
    \begin{align*}
        m_A^M(\{a_i,a_j\})&=Bel_{m_A^M}(\{a_i,a_j\})-m_A^M(\{a_i\})-m_A^M(\{a_j\})\\
        &=\max \{\underline{p}^A_i+\underline{p}^A_j,1-\sum_{h\ne i,j}\overline{p}^A_h\}-\underline{p}^A_i-\underline{p}^A_j.
    \end{align*}
    Hence, it is enough to prove that $\underline{p}^A_i+\underline{p}^A_j\ge 1-\sum_{h\ne i,j}\overline{p}^A_h$,
    which is the same as showing that
    \begin{align}
        &\overline{S}_A-1-(\overline{p}^A_i-\underline{p}^A_i)-(\overline{p}^A_j-\underline{p}^A_j)\ge 0\nonumber\\
        &\Leftarrow (n-2)(1-\underline{S}_A)-(\overline{p}^A_i-\underline{p}^A_i)-(\overline{p}^A_j-\underline{p}^A_j)\ge 0 \label{salame}
    \end{align}
    where the implication follows from the fact that the probability interval is good thus $\Delta_A\ge 0$. To prove (\ref{salame}) we just note that $n\ge 4$ and
    \begin{equation} \label{condoch3_rewritten}
        \overline{p}^A_h-\underline{p}^A_h\le 1-\underline{S}_A\quad\text{for all $h=1,\dots,n$}
    \end{equation}
    which is just a rewriting of the condition of coherency \hyperref[condcoh3]{[Coh3]}. Therefore, due to the arbitrariness of $i$ and $j$, every set of cardinality $2$ has zero mass.

    Consider $V\in 2^{E_A}$ such that $2<|V|\le n-2$. From the definition of belief we have that
    \begin{align}
        m_A^M(V)&=Bel_{m_A^M}(V)-\sum_{W\subset V}m_A^M(W) \nonumber\\
        &=Bel_{m_A^M}(V)-\sum_{i:a_i\in V}m_A^M(\{a_i\})\label{salame2}\\
        &=\max\{\sum_{i:a_i\in V}\underline{p}_i^A,1-\sum_{i:a_i\notin V}\overline{p}_i^A\}-\sum_{i:a_i\in V}\underline{p}_i^A \nonumber
    \end{align}
where (\ref{salame2}) follows from the induction hypothesis. Therefore, as in the previous case, it suffices to prove that
\begin{equation*}
    \sum_{i:a_i\in V}\underline{p}_i^A\ge 1-\sum_{i:a_i\notin V}\overline{p}_i^A.
\end{equation*}
This inequality is implied by 
\begin{equation*}
    (n-2)(1-\underline{S}_A)-\sum_{i:a_i\in V}(\overline{p}^A_i-\underline{p}^A_i)\ge 0
\end{equation*}
due to the fact that $(\overline{p}^A,\underline{p}^A)$ is a good probability interval. This last inequality follows from (\ref{condoch3_rewritten}) and the fact that $|V|\le n-2$. Therefore, due to the arbitrariness of $V$, every set of cardinality $2< k \le n-2$ has mass $m_A^M$ equal to zero.

To complete the proof, it remains to show that the mass of sets with cardinality \( n-1 \) is equivalent to the standard good mass of these same sets. Using the definition of belief, we have
\begin{align}
    m_A^M(E_A\smallsetminus\{a_i\})&=Bel_{m_A^M}(E_A\smallsetminus\{a_i\})-\;\sum_{\mathclap{W\subset E_A\smallsetminus\{a_i\}}}m_A^M(W)\nonumber\\
    &=\max\{\sum_{h\ne i}\underline{p}^A_h,1-\overline{p}^A_i\}-\sum_{h\ne i}m_A^M(\{a_h\})\label{salame3}\\
    &=1-\overline{p}^A_i -\sum_{h\ne i}\underline{p}_h^A \label{salame4}
\end{align}
where (\ref{salame3}) follows from the fact that every other subset has zero mass, and (\ref{salame4}) follows from the condition of coherency \hyperref[condcoh3]{[Coh3]}.

The fact that $m_A^M(E_A)=\Delta_A$ follows from the fact that $\sum_{V\in 2^{E_A}}m_A^M(V)=1$.
\begin{align*}
    m_A^M(E_A)&=1-\sum_{i=1}^n m_A^M(\{a_i\})-\sum_{i=1}^n m_A^M(E_A\smallsetminus\{a_i\})\\
    &=\overline{S}_A+(n-2)\underline{S}_A-(n-1).
\end{align*}
\end{proof}

\additionalinfo
\begin{acknowledgements}
This work has been partly supported by the PersOn project
(P21-03), which has received funding from Nederlandse Organisatie
voor Wetenschappelijk Onderzoek (NWO). Moreover, we are grateful to four anonymous reviewers for their careful reading and thoughtful comments. Their input was essential in helping us strengthen the paper and present our ideas more clearly.
\end{acknowledgements}

\printbibliography

\end{document}